\documentclass{llncs}

\usepackage{eccv}

\usepackage{graphicx}
\usepackage{booktabs}

\usepackage[breaklinks,colorlinks,citecolor=eccvblue]{hyperref}
\begin{document}
\pagestyle{empty}

\title{MetaSR: Content-Adaptive Metadata Orchestration for Generative Super-Resolution}
\titlerunning{MetaSR: Content-Adaptive Metadata Orchestration for Generative Super-Resolution}

\author{Jiaqi Guo\inst{1} \and Mingzhen Li\inst{1} \and Haohong Wang\inst{2} \and Aggelos K. Katsaggelos\inst{1}}
\authorrunning{J.~Guo et al.}

\institute{Department of Electrical and Computer Engineering, Northwestern University, Evanston, Illinois, USA\\
\email{\{jiaqi.guo, mingzhen.li, a-katsaggelos\}@northwestern.edu}
\and
TCL\\
\email{haohong.wang@tcl.com}}
\maketitle

\begin{abstract}
We study generative super-resolution (SR) in real-world scenarios where content and degradations vary across domains, genres, and segments. For example, images and videos may alternate between text overlays, fast motion, smooth cartoons, and low-light faces, each benefiting from different forms of side information. Existing metadata-guided SR methods typically use a fixed conditioning design, which is suboptimal when useful cues are content dependent and transmission budgets are limited. We propose MetaSR, a Diffusion Transformer (DiT)-based framework that selects and injects task-relevant metadata to guide SR under resource constraints. Specifically, we use the DiT's own VAE and transformer backbone to fuse heterogeneous metadata, and adopt an efficient distillation strategy that enables one-step diffusion inference. Experiments across diverse content buckets and degradation regimes show that MetaSR outperforms reference solutions by up to 1.0~dB PSNR while achieving up to 50\% transmission bitrate saving at matched quality. We assess these gains under a rate--distortion optimization (RDO) framework that jointly accounts for sender-side bitrate and receiver/display quality metrics (e.g., PSNR and SSIM).

\keywords{Generative AI \and Diffusion model \and Super-resolution \and SR \and Metadata \and Content-adaptive processing \and DiT \and Diffusion Transformer \and Rate--distortion optimization}
\end{abstract}

\section{Introduction}
\label{sec:introduction}
\begin{figure}[ht]
  \centering
  \includegraphics[width=0.90\linewidth]{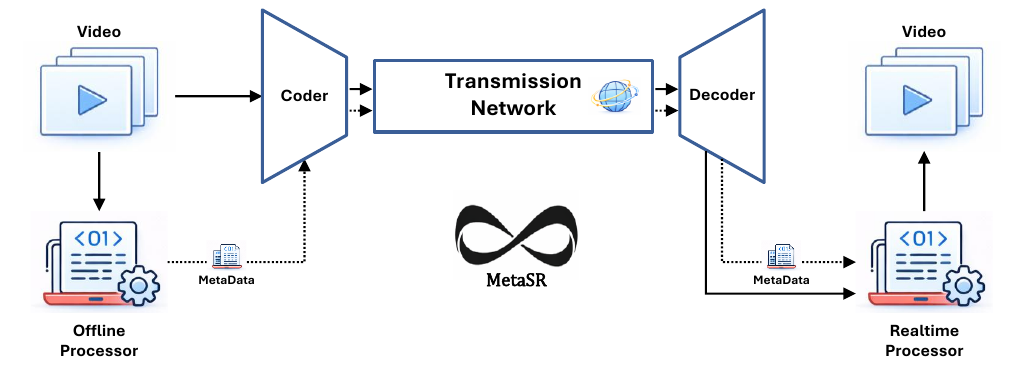}
  \caption{Overview of the proposed sender--receiver collaborative pipeline. The sender performs content analysis and metadata generation/compression, while the receiver combines decoded metadata with transmitted low-resolution content for adaptive generative super-resolution.}
  \label{fig:sender_receiver}
\end{figure}
Super-resolution (SR) aims to recover a high-resolution (HR) image or video from a low-resolution (LR) observation. Most existing approaches treat SR as a receiver-side, pixel-centric inference problem, implicitly assuming that all recoverable information is contained within the degraded signal. However, SR is fundamentally an information reconstruction problem under incomplete observations, where missing high-frequency content cannot be deterministically inferred from pixels alone. In contrast to conventional SR systems that rely exclusively on pixel-domain inference or a single neural reconstruction model, we consider a practical networked deployment scenario and disclose a sender–receiver collaborative framework, as depicted in Fig.~\ref{fig:sender_receiver}. In this framework, a transmitting system and a receiving system operate over a bandwidth-constrained communication channel to enable high-fidelity super-resolution under realistic transmission conditions. The sender side performs computationally intensive metadata generation, representation extraction, and compression, producing compact auxiliary information that complements the transmitted low-resolution content. The receiver then decodes the metadata and combines it with the low-resolution input for super-resolution. This system-level perspective is aligned with the Rich Detail Range (RDR) concept for AI-era perceptual quality and display-oriented detail recovery~\cite{wangrich}.

A growing body of work augments SR with metadata such as Canny edges, semantic masks, text descriptions, ROI maps, references, and motion hints to guide reconstruction. However, real-world SR remains challenging because the LR input is often produced by complex and time-varying degradations, and the optimal cues for recovering fine details depend strongly on the content and segment characteristics. For example, text-heavy regions require sharp edge and character-structure guidance, high-motion sports benefit from temporal motion cues for stability, while natural scenes may rely more on semantic priors to avoid unnatural textures. These observations motivate a question that is largely underexplored in SR: given a piece of content, which metadata should be used—and at what cost—to maximize SR quality and stability?

We propose MetaSR, a content-adaptive metadata orchestration framework that explicitly addresses this question. It is built upon CogVideoX-2B~\cite{yang2024cogvideox} as a generative backbone, enabling a natural extension toward video super-resolution in future work. Moreover, following~\cite{chen2025dove}, we adopt a simple yet effective training strategy to learn a one-step LR to HR mapping. As a proof-of-concept and benchmark study, we focus on two representative metadata forms, Canny edge maps and depth maps, and thoroughly evaluate their contributions to image super-resolution (ISR). 

Our main contributions are as follows:
\begin{itemize}
  \item We formulate generative SR as content-adaptive metadata orchestration in a sender--receiver system and introduce an explicit RDO perspective grounded in operational rate--distortion theory~\cite{schuster2013rate}.
  \item We propose a unified conditioning interface that connects metadata orchestration with the generative backbone. Importantly, this interface relies solely on the native DiT architecture without structural modification, while allowing flexible integration of multiple metadata inputs.
  \item We validate the end-to-end pipeline under an RDO evaluation that jointly measures sender transmission bitrate and receiver/display quality, achieving up to 1.0~dB gain and up to 50\% bitrate saving over DOVE in challenging conditions.
\end{itemize}






\section{Related Work}
\label{sec:related_work}

\subsection{Pixel-Domain Super-Resolution}

\paragraph{Deterministic regression SR from LR pixels only.}
A dominant SR formulation treats super-resolution as a receiver-side regression problem: learn a mapping from a low-resolution (LR) observation to a high-resolution (HR) output using only pixel-domain inputs at inference time.
Early deep SR established end-to-end LR$\rightarrow$HR learning with convolutional networks, exemplified by SRCNN~\cite{dong2014srcnn}.
Subsequent work improved fidelity via deeper and more stable residual designs, e.g., VDSR ~\cite{kim2016vdsr}, and by optimizing residual architectures for SR, e.g., EDSR ~\cite{lim2017edsr}.
Channel-attention and very-deep residual composition further strengthened pixel-fidelity baselines, e.g., RCAN ~\cite{zhang2018rcan}.
Transformer-based restorers later improved long-range modeling for SR and related restoration tasks, as in SwinIR ~\cite{liang2021swinir}.
Despite architectural diversity, these pixel-only SR methods share a key assumption aligned with the draft’s motivation: all recoverable information must be inferred at the receiver from the degraded LR signal alone, with no explicit side information transmitted by a sender.

\paragraph{Perceptual SR and hallucination-prone priors without side information.}
To improve perceptual realism beyond distortion metrics, a large family of SR methods adopted adversarial and perceptual objectives. A canonical baseline is SRGAN ~\cite{ledig2017srgan}, followed by ESRGAN ~\cite{wang2018esrgan}.
Real-ESRGAN ~\cite{wang2021realesrgan} emphasizes robustness under unknown, compound degradations.
These approaches sit squarely within the perception--distortion trade-off formalized by Blau and Michaeli ~\cite{blau2018perceptiondistortion}: under severe information loss, perceptually plausible details may diverge from the true unknown HR, especially when conditioning is limited to the LR pixels.
This observation directly motivates MetaSR’s sender--receiver framing: transmitting structured metadata can act as an additional constraint to reduce ambiguity, rather than relying solely on a receiver-side prior.

\paragraph{Diffusion-based generative SR under pixel-only conditioning.}
Diffusion models reframe SR as conditional sampling, enabling multiple plausible HR reconstructions from a single LR input and leveraging strong learned priors.
Foundational diffusion formulations include DDPM ~\cite{ho2020ddpm} and latent diffusion with autoencoder compression ~\cite{rombach2022ldm}.
A representative diffusion SR method is SR3 ~\cite{saharia2021sr3}, which performs SR through iterative denoising refinement.
DDRM ~\cite{kawar2022ddrm} leverages a pretrained diffusion prior for inverse problems including super-resolution, while StableSR ~\cite{wang2023stablesr} adapts large pretrained diffusion priors to blind SR.
While diffusion SR improves perceptual plausibility, the draft highlights a key limitation: when LR observations are heavily corrupted and pixel-only conditioning remains ambiguous, the model may generate plausible but incorrect details.
MetaSR’s conceptual contribution is orthogonal: it reduces this ambiguity by explicitly transmitting side information (metadata) under bitrate constraints and fusing it through a DiT-style backbone, rather than relying on LR pixels alone.

\subsection{Metadata-Guided Super-Resolution}

\paragraph{Edge and structure guidance for sharpening boundaries.}
A direct way to reduce SR ambiguity is to supply structural cues that constrain boundaries and high-frequency transitions.
Classic edge detection yields compact structure maps, with Canny being a canonical choice~\cite{canny1986}.
Edge-guided SR methods inject such cues into SR pipelines, e.g., DEGREE ~\cite{yang2017degree} and SREdgeNet ~\cite{kim2018sredgenet}.
These methods validate the usefulness of edges as auxiliary guidance; however, they typically assume the edge signal is always available (often extracted at the receiver) and do not model an explicit sender that selects and transmits edges under a bitrate budget.
MetaSR differs by treating edges (and other metadata) as \emph{transmittable} side information whose cost matters, motivating bitrate-aware selection and compression of metadata streams.

\paragraph{Semantic and geometric priors as metadata for class-consistent textures.}
Beyond edges, semantic or geometric metadata can guide SR toward class- or structure-consistent textures.
SFTGAN conditions SR on semantic segmentation probability maps via spatial feature modulation~\cite{wang2018sftgan}.
While effective, such pipelines commonly rely on a \emph{fixed} metadata choice (e.g., always segment-then-modulate), and typically do not consider that (i) the best metadata may vary across content segments, and (ii) metadata may be expensive to communicate in sender--receiver deployments.
MetaSR’s framing makes these constraints explicit: metadata is a controllable resource (bitrate) and the orchestrator should decide \emph{which} metadata types to include per content.

\paragraph{Reference-based SR as implicit side information.}
Reference-based SR (RefSR) uses external exemplars to provide texture information beyond the LR input.
SRNTT formulates RefSR as neural texture transfer that degrades gracefully when references are less relevant~\cite{zhang2019srntt}.
TTSR uses transformer attention between LR and reference features to transfer textures~\cite{yang2020ttsr}.
These approaches are conceptually aligned with MetaSR’s motivation (additional information can resolve SR ambiguity), but they generally assume reference availability at inference (via retrieval or provisioning) rather than modeling a sender that transmits compact, structured, verifiable metadata under explicit bitrate constraints.

\paragraph{Multi-condition diffusion control versus DiT-native conditioning.}
In diffusion ecosystems, spatial control signals (edges, depth, segmentation) have been integrated into large generative priors through auxiliary control networks.
ControlNet adds conditioning controls to pretrained diffusion models via a parallel trainable branch and zero-initialized convolutions~\cite{zhang2023controlnet}.
T2I-Adapter learns lightweight adapters for external control signals while freezing the base model~\cite{mou2023t2iadapter}.
These works demonstrate that heterogeneous metadata (edges, depth) can effectively steer diffusion generation, but they commonly introduce additional modules or conditioning pathways.
By contrast, MetaSR’s draft claims a \emph{unified conditioning interface} that relies on the \emph{native DiT architecture} (and its VAE/tokenization pathway) to fuse heterogeneous metadata without structural redesign per metadata type---a design particularly aligned with diffusion transformers such as DiT ~\cite{peebles2023dit} and video diffusion transformers such as CogVideoX ~\cite{yang2024cogvideox}.

\paragraph{Sender--receiver side information and rate constraints.}
MetaSR’s distinctive framing is not merely “conditioning helps SR,” but rather “conditioning must be selected and transmitted under a rate budget.”
This explicitly connects to classic source coding with decoder side information, notably Wyner--Ziv coding ~\cite{wyner1976wynerziv}, and to layered delivery in scalable coding, e.g., the SVC extension of H.264/AVC ~\cite{schwarz2007svc}.
Recent learned communication systems also study end-to-end sender/receiver learning under channel constraints, such as deep JSCC ~\cite{bourtsoulatze2019deepjscc}.
MetaSR differs in the \emph{form} of side information: rather than transmitting generic enhancement residuals or opaque learned latents, it emphasizes structured metadata (e.g., bi-level edges) that can be compressed with standardized tools ~\cite{itut2000t88,jpeg2001jbig2} and can be validated via a verification gate, enabling safe fallback to pixel-only conditioning when metadata is unreliable.

\subsection{Efficient and One-Step Diffusion Models}

\paragraph{Training-free acceleration of multi-step diffusion sampling.}
Standard diffusion sampling is slow due to many sequential denoising steps.
A large class of methods accelerates inference without retraining by changing the sampler.
DDIM introduces non-Markovian sampling trajectories that reduce steps~\cite{song2021ddim}.
DPM-Solver and UniPC propose higher-order solvers/predictor--corrector frameworks for fast sampling~\cite{lu2022dpmsolver,zhao2023unipc}.
These techniques are complementary to MetaSR: they speed up diffusion but generally remain multi-step and do not address the draft’s sender-side metadata transmission problem.

\paragraph{Distillation and consistency training for few-step / one-step generation.}
Another direction trains models to intrinsically generate in very few steps.
Progressive Distillation iteratively distills multi-step teachers into fewer-step students~\cite{salimans2022progressive}.
Consistency Models learn mappings that enable one-step generation while allowing multi-step refinement~\cite{song2023consistency}.
Latent Consistency Models extend consistency-style distillation to latent diffusion backbones~\cite{luo2023lcm}, and Adversarial Diffusion Distillation combines score distillation with adversarial objectives to improve quality at 1--4 steps~\cite{sauer2023add}.
In MetaSR’s draft framing, such distillation methods are enabling techniques for deployment: they make it feasible to run a powerful diffusion backbone at receiver-side latency budgets, allowing the research focus to shift toward metadata orchestration under bitrate constraints.

\paragraph{Efficient diffusion specialized for restoration and SR, including one-step diffusion SR.}
Several methods tailor diffusion acceleration specifically to restoration/SR.
ResShift redesigns the diffusion path to reduce the required number of denoising steps for SR~\cite{yue2023resshift}.
Closest to MetaSR’s deployment objective is one-step SR in video diffusion backbones: DOVE fine-tunes a pretrained video diffusion transformer (CogVideoX) and introduces a latent--pixel training strategy for one-step VSR~\cite{chen2025dove,yang2024cogvideox}.
MetaSR adopts a similar one-step philosophy (as cited in the draft) but focuses its novelty on \emph{content-adaptive, bitrate-constrained metadata orchestration} and a \emph{unified DiT-based conditioning interface} that can fuse heterogeneous transmitted metadata into the diffusion transformer backbone.

\section{Method}
\begin{figure}[t]
  \centering
  \includegraphics[width=\linewidth]{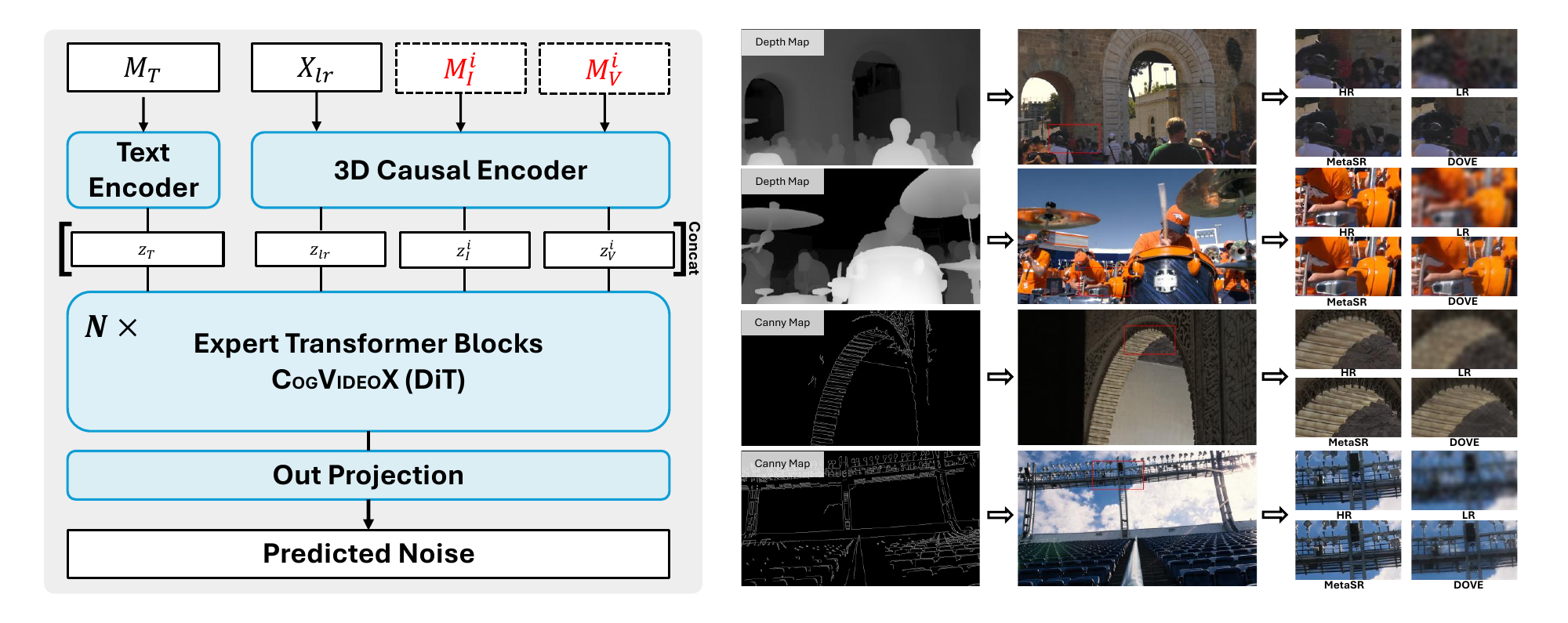}
  \caption{Overview of MetaSR. Left: schematic of how metadata information is projected into the denoising process through native DiT modules. Right: representative qualitative image SR examples under Canny/depth guidance.}
  \label{fig:framework_exp}
\end{figure}


\subsection{Backbone and one-step generative SR}
MetaSR is built upon the pretrained DiT video model CogVideoX-2B, following the same two-stage fine-tuning pipeline as DOVE~\cite{chen2025dove}.
CogVideoX uses a 3D causal VAE with encoder $\mathcal{E}$ and decoder $\mathcal{D}$ to map inputs to a latent space, and a Transformer denoiser $v_{\theta}$ for conditional generation.
Given a low-quality input $x_{\mathrm{lr}}$ (e.g., compressed or downsampled), we resize it to the target resolution and encode:
\begin{equation}
z_{\mathrm{lr}} = \mathcal{E}(x_{\mathrm{lr}}).
\end{equation}
We treat $z_{\mathrm{lr}}$ as a noisy latent at timestep $t$ and perform a single denoising step:
\begin{equation}
z_{\mathrm{sr}}
= \sqrt{\bar{\alpha}_t}\,z_{\mathrm{lr}}
- \sqrt{1-\bar{\alpha}_t}\;v_{\theta}(z_{\mathrm{lr}}, c, t),
\label{eq:onestep_vpred}
\end{equation}
where $c$ denotes conditions (i.e., text or metadata).
Finally, the output image is decoded: $x_{\mathrm{sr}} = \mathcal{D}(z_{\mathrm{sr}})$.
This preserves the original CogVideoX architecture (VAE + DiT blocks) and requires no new networks at inference.

\subsection{Two-stage latent--pixel training}
As in DOVE~\cite{chen2025dove}, we use a two-stage training strategy: (i) latent-space adaptation and (ii) pixel-space refinement.
In Stage 1, given $(x_{\mathrm{lr}},x_{\mathrm{hr}})$ we encode the HR target into latents $z_{\mathrm{hr}}=\mathcal{E}(x_{\mathrm{hr}})$ and minimize
\begin{equation}
\mathcal{L}_{\mathrm{s1}}
= \frac{1}{\|z_{\mathrm{hr}}\|_2^2}\,\big\|z_{\mathrm{sr}}-z_{\mathrm{hr}}\big\|_2^2,
\label{eq:stage1_latent}
\end{equation}
where $z_{\mathrm{sr}}$ is from Eq.~\eqref{eq:onestep_vpred}.
The VAE is frozen and only the DiT denoiser is updated.
In Stage 2, we refine in pixel space to recover fine details:
for images we use 
\begin{equation}
\mathcal{L}_{\mathrm{s2,img}}
= \|\hat{x}_{\mathrm{sr}} - \hat{x}_{\mathrm{hr}}\|_2^2
+ \lambda_{1}\,\mathcal{L}_{\mathrm{per}}(\hat{x}_{\mathrm{sr}},\hat{x}_{\mathrm{hr}}),
\label{eq:stage2_img}
\end{equation}
and for videos 
\begin{align}
\mathcal{L}_{\mathrm{s2,vid}}
&= \|x_{\mathrm{sr}} - x_{\mathrm{hr}}\|_2^2
+ \lambda_{1}\,\mathcal{L}_{\mathrm{per}}(x_{\mathrm{sr}},x_{\mathrm{hr}})
+ \lambda_{2}\,\mathcal{L}_{\mathrm{frame}}(x_{\mathrm{sr}},x_{\mathrm{hr}}), \label{eq:stage2_vid} \\
\mathcal{L}_{\mathrm{frame}}
&= \frac{1}{n-1}\sum_{i=2}^{n}
\bigl\|\Delta x^{(i)}_{\mathrm{sr}} - \Delta x^{(i)}_{\mathrm{hr}}\bigr\|_{1},
\quad
\Delta x^{(i)} = x^{(i)} - x^{(i-1)}.
\label{eq:frame_diff}
\end{align}
This yields efficient adaptation (Stage 1) and high-fidelity restoration (Stage 2).

\subsection{Unified token processing for metadata}
We represent each metadata modality $m^{(k)}$ (e.g., edges, depth, segmentation) as an image tensor and encode it with the \emph{same} 3D VAE:
\begin{equation}
z_{\mathrm{meta}}^{(k)} = \mathcal{E}(m^{(k)}),
\qquad k=1,\ldots,K.
\label{eq:meta_vae}
\end{equation}
By construction, $z_{\mathrm{lr}}$ and $z_{\mathrm{meta}}^{(k)}$ are in the same latent space. We flatten all latents into tokens and concatenate:
\begin{equation}
\mathbf{Z} = [Z_{\mathrm{lr}}; Z_{\mathrm{T}}; Z_{\mathrm{meta}}], \quad
Z_{\mathrm{meta}} = [Z_{\mathrm{meta}}^{(1)};\dots; Z_{\mathrm{meta}}^{(K)}],
\end{equation}
where $Z_{\mathrm{lr}}$ are noisy visual tokens, $Z_{\mathrm{T}}$ are text tokens, and $Z_{\mathrm{meta}}$ collects metadata tokens.
MetaSR then applies the original DiT blocks:
\begin{equation}
\mathbf{H}^{(\ell+1)} = \mathrm{DiTBlock}^{(\ell)}(\mathbf{H}^{(\ell)};t).
\end{equation}
All tokens attend to each other via multi-modal attention, enabling both spatial and semantic conditioning without new modules.

\paragraph{Position-aware token interaction.}
CogVideoX-2B uses a fixed 3D sinusoidal positional encoding for visual tokens. We assign the same positional indices to aligned metadata tokens, ensuring spatial consistency.
In this way, metadata is fused through the backbone’s native attention, and in the next section we prove that such side information provably reduces uncertainty.

\subsection{Posterior Entropy Suppression via Verifiable Metadata}
\label{sec:theory_entropy_suppression}

\paragraph{Setup.}
Let $X$ denote the target (HR) image and $Y$ its degraded observation. Let $M$ be structured metadata (e.g.\ depth, edges) extracted at the encoder.
We use $q(x,y,m)$ for the joint data distribution.

\begin{theorem}[Entropy reduction under side information]
\label{thm:entropy_reduction}
For any joint $q(x,y,m)$,
\begin{align}
H_q(X\mid Y,M) &\le H_q(X\mid Y), \\
H_q(X\mid Y)-H_q(X\mid Y,M) &= I_q(X;M\mid Y) \ge 0.
\end{align}
\end{theorem}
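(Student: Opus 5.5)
The plan is to derive both claims from the definition of conditional mutual information; the inequality then follows from non-negativity of a Kullback--Leibler divergence. Throughout, all entropies and informations are taken under the joint $q(x,y,m)$. I treat the discrete case first and then indicate what changes for continuous variables, where $H_q$ denotes differential entropy.

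\textbf{Step 1: the identity.} I will write $H_q(X\mid Y) = -\mathbb{E}_q[\log q(X\mid Y)]$ and $H_q(X\mid Y,M) = -\mathbb{E}_q[\log q(X\mid Y,M)]$, and subtract. Since both expectations are over the same joint $q$, this gives
\begin{equation*}
H_q(X\mid Y)-H_q(X\mid Y,M)=\mathbb{E}_q\!\left[\log\frac{q(X\mid Y,M)}{q(X\mid Y)}\right]
=\mathbb{E}_q\!\left[\log\frac{q(X,M\mid Y)}{q(X\mid Y)\,q(M\mid Y)}\right].
\end{equation*}
The right-hand side is, by definition, $I_q(X;M\mid Y)$. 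The second equality uses only the factorization $q(x\mid y,m)=q(x,m\mid y)/q(m\mid y)$, which is valid wherever $q(m\mid y)>0$; those are exactly the points carrying mass under $q$.

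\textbf{Step 2: non-negativity.} I will rewrite the conditional mutual information as an average of KL divergences,
\begin{equation*}
I_q(X;M\mid Y)=\mathbb{E}_{y\sim q(y)}\,D_{\mathrm{KL}}\!\bigl(q(x,m\mid y)\,\big\|\,q(x\mid y)q(m\mid y)\bigr).
\end{equation*}
Each divergence is $\ge 0$ by Gibbs' inequality, which follows from Jensen's inequality applied to the concave function $\log$. An average of non-negative terms is non-negative, so $I_q(X;M\mid Y)\ge 0$. Combining this with the identity from Step 1 gives $H_q(X\mid Y,M)\le H_q(X\mid Y)$, which is the first claim.

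\textbf{Main obstacle.} The only delicate point is well-definedness: the two entropies must be finite, so that the subtraction in Step 1 is not $\infty-\infty$. In the discrete case with finite alphabets, such as quantized images, this holds automatically. In the continuous case I will add the standing assumption that the relevant conditional densities exist and that $H_q(X\mid Y)$ is finite. The mutual information, written as a KL divergence, is then well-defined in $[0,\infty]$, and the same argument goes through without change. Should finiteness fail, I would state the result in the form $H_q(X\mid Y,M)+I_q(X;M\mid Y)=H_q(X\mid Y)$, interpreted in the extended reals.

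Finally, equality holds exactly when $X\perp M\mid Y$. This will be the hook for the subsequent discussion of \emph{verifiable} metadata: metadata that is informative beyond $Y$ strictly reduces posterior entropy.
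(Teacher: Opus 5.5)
Your proposal is correct and follows the paper's own argument. Both identify $H_q(X\mid Y)-H_q(X\mid Y,M)$ with $I_q(X;M\mid Y)$, write this as a $q(y)$-average of $D_{\mathrm{KL}}(q(x,m\mid y)\,\Vert\,q(x\mid y)q(m\mid y))$, and conclude by nonnegativity of KL; you additionally derive the identity explicitly from the log-ratio rather than taking it as a definition, and your finiteness caveats for the differential-entropy and infinite-alphabet cases are sensible refinements the paper leaves implicit.
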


\begin{proof}
By definition, $I_q(X;M\mid Y)=H_q(X\mid Y)-H_q(X\mid Y,M)$. Also,
\begin{equation}
I_q(X;M\mid Y)
=\mathbb{E}_{q(Y)}\bigl[D_{\mathrm{KL}}(q(X,M\mid Y)\,\Vert\,q(X\mid Y)q(M\mid Y))\bigr]\ge0.
\end{equation}
Rearranging proves the result.
\end{proof}

\paragraph{Verification-gated conditioning.}
To handle unreliable $M$, we introduce a gate $V=v(Y,M)\in\{0,1\}$.
Set $\tilde M=M$ if $V=1$, else $\tilde M=\bot$.
Then by Theorem~\ref{thm:entropy_reduction},
\[
H_q(X\mid Y,\tilde M) \le H_q(X\mid Y),
\]
ensuring gating never increases entropy.

\paragraph{Implication for log-likelihood training.}
For model $p_\theta(x\mid y,\tilde m)$, the expected NLL is
\[
\mathbb{E}_{q}[-\log p_\theta(X\mid Y,\tilde M)]
= H_q(X\mid Y,\tilde M) + \mathbb{E}_{q}[D_{\mathrm{KL}}(q(\cdot)\,\Vert\,p_\theta(\cdot))] \ge H_q(X\mid Y,\tilde M).
\]
Thus if $I_q(X;M\mid Y)>0$, conditioning on $M$ lowers the optimal log-loss.

\paragraph{Connection to metadata bitrate.}
Since $I_q(X;M\mid Y)\le H_q(M)$, the information gain is bounded by the metadata entropy (and thus bitrate).
This motivates allocating metadata bits to maximize $I_q(X;M\mid Y)$ under budget, as evaluated in Sec.~\ref{sec:experiments}.

\section{Experiments}
\label{sec:experiments}
\subsection{Training Data and Fine-Tuning Setup}
\label{sec:training_setup}
\paragraph{Datasets.}
We fine-tune CogVideoX-2B, a pretrained video diffusion backbone, on a mixed corpus of DIV2K images and HQ-VSR videos.
DIV2K contains 1,000 2K-resolution images with an 800/100 train/test split, while HQ-VSR comprises 2{,}055 high-quality videos for video super-resolution.
All implementations are based on PyTorch, and training is conducted on two NVIDIA A100 GPUs. We adopt a progressive-resolution curriculum from 320$\times$640 to 720$\times$1280 and run two-stage fine-tuning for fewer than 10{,}000 optimization steps.

Although the proposed framework is inherently applicable to video super-resolution, all experiments reported in this paper are performed on a frame-converted subset of UDM videos.
This design choice is driven by two practical considerations: (i) there is no off-the-shelf codec tailored for compressing 2-bit edge-like side signals in video form, and (ii) frame-level evaluation helps maintain fair comparison with existing baselines.


\subsection{Posterior Uncertainty in the Compression--Transmission--Generation Pipeline}
\label{sec:posterior_uncertainty}
This subsection shows that, in a unified compression, transmission, and generation system, the posterior over plausible HR reconstructions is inherently uncertain. We use controlled degradations and representative SR baselines to expose the sources of this uncertainty before introducing our metadata-aware RDO analysis.

\paragraph{JPEG-based bitrate sweep.}
To study the impact of compression artifacts under controlled bitrates, we apply JPEG compression to non-downsampled images while varying the quality factor to obtain different frame bitrates. This setup isolates compression artifacts from classical downsampling-induced information loss.

\paragraph{Synthetic transmission degradations.}
To emulate sender--receiver channel imperfections, we define two degradation levels on top of the JPEG-compressed data. Starting from the no-noise case, we add zero-mean gray-scale Gaussian noise (LN: $\sigma=10$, HN: $\sigma=20$), followed by isotropic Gaussian blur (LN: kernel size $5$, $\sigma=0.8$; HN: kernel size $7$, $\sigma=1.2$). The resulting distortion trend is summarized by PSNR/SSIM in Fig.~\ref{fig:poison}(a).

\paragraph{Baselines and metrics.}
For Fig.~\ref{fig:poison}(b), we compare three representative SR methods: (i) EDSR (CNN regression SR), (ii) StableSR (diffusion-prior blind image SR), and (iii) DOVE (CogVideoX-adapted diffusion model for real-world video SR). We report PSNR and SSIM as full-reference fidelity/structure metrics.

\paragraph{Implication for posterior uncertainty.}
Fig.~\ref{fig:poison}(a) shows that compression and channel degradation jointly reduce reconstruction quality. Fig.~\ref{fig:poison}(b) further shows that even without explicit spatial downsampling, generative SR models may introduce stronger hallucination-like deviations due to prior-driven reconstruction. Together, these observations indicate non-negligible posterior uncertainty in practical systems. This directly motivates the next subsection, where we use a metadata-aware RDO framework to quantify how much side information is needed to reduce this uncertainty and improve reconstruction quality.

\begin{figure}[t]
  \centering
  \includegraphics[width=\linewidth]{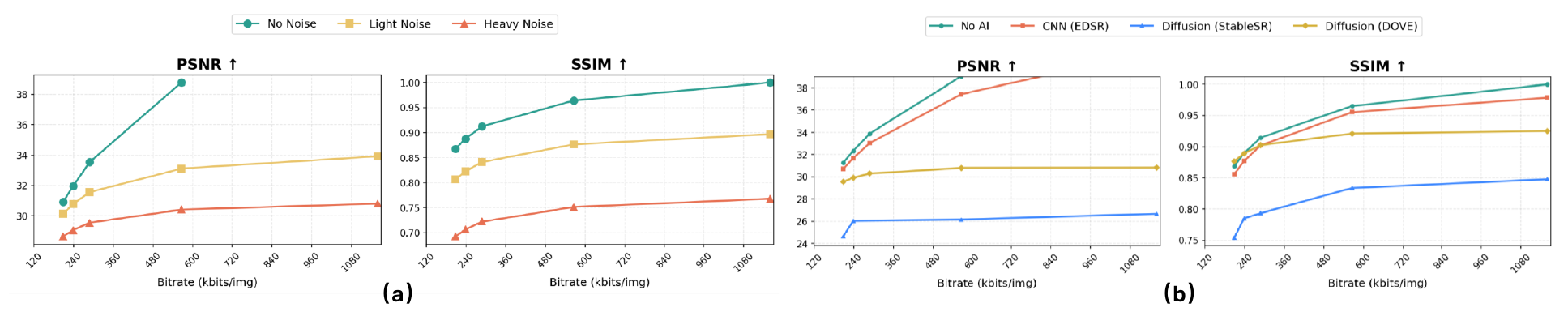}
  \caption{Case study of posterior uncertainty in a compression, transmission, and generation pipeline. Panel (a) reports quality degradation under bitrate and channel corruption, and panel (b) compares EDSR, StableSR, and DOVE to highlight prior-driven hallucination effects.}
  \label{fig:poison}
\end{figure}

\subsection{Metadata Rate--Distortion Evaluation}
\label{sec:fig4}

We evaluate MetaSR in a sender--receiver setting where the sender transmits (i) a JPEG-compressed base layer and (ii) a compact metadata stream, and the receiver reconstructs high-quality outputs using either DOVE or MetaSR. Our goal is to characterize the metadata rate--distortion (R--D) trade-off under different transmission conditions, and to use an RDO framework to quantify how metadata bitrate reduces posterior uncertainty and improves reconstruction quality. Unless otherwise specified, all experiments in this section are conducted under 4$\times$ super-resolution. In LN/HN settings, the LQ input to both DOVE and MetaSR is degraded using the same corruption pipeline as in Sec.~\ref{sec:posterior_uncertainty}.

\paragraph{RDO criterion for end-to-end SR.}
We adopt an operational RDO criterion~\cite{schuster2013rate} that jointly evaluates transmission cost and reconstruction quality:
\begin{equation}
J = D + \lambda R,
\end{equation}
where $R$ is the total sender bitrate (JPEG base layer + metadata) and $D$ is the receiver/display distortion term (e.g., based on PSNR or SSIM). This criterion is important because quality gains are only practically useful when achieved under reasonable transmission cost.

\paragraph{Metadata, JBIG2 compression, and bitrate accounting.}
We use Canny edge maps~\cite{canny1986} as structured metadata and compute edges on the HR reference at the sender. We compress the binary edge maps using JBIG2, which is a standardized codec for bi-level images and supports lossless coding. For each noise regime, we sweep the metadata bitrate by varying the edge-map sparsity/quantization and compressing the resulting representation with lossless JBIG2. At each rate point, we reconstruct using DOVE and MetaSR and report PSNR, SSIM, LPIPS, DISTS, and CLIP-IQA~\cite{wang2004ssim,zhang2018lpips,ding2020dists,wang2022clipiqa}. 

As shown in Fig.~\ref{fig:rdo_curves}, metadata guidance consistently improves the R--D behavior of MetaSR over DOVE at the same total transmission budget (base layer + metadata) across all transmission regimes. The gain becomes larger as channel degradation increases (HN $>$ LN $>$ NN), indicating that structured side information is most beneficial when the LQ observation is heavily corrupted by noise and blur. The improvement is most visible on structure-aware and perceptual metrics, including SSIM, LPIPS, DISTS, and CLIP-IQA. In the HN regime, MetaSR reaches up to 1.0~dB PSNR gain over DOVE at matched bitrate and achieves up to 50\% bitrate saving at matched PSNR. The NN regime is used as an idealized upper-bound reference, since real-world image/video delivery is rarely truly noise-free.

\paragraph{Qualitative perception under HN degradation.}
Beyond metric curves, MetaSR reconstructions in HN conditions show sharper contours, more coherent local textures, and fewer hallucinated artifacts than DOVE, especially around high-frequency edges and text-like structures. These visual trends are consistent with the gains observed in LPIPS, DISTS, and CLIP-IQA.

\paragraph{Generality and metadata selection under RDO.}
The metadata-guided paradigm is model-agnostic and can be extended to other AI SR backbones beyond CogVideoX/DiT. In general, informative metadata tends to improve reconstruction quality compared with no-metadata baselines. However, once the RDO objective is enforced, selecting which metadata components to transmit becomes a central design problem, because different modalities provide different quality gains per transmitted bit.

\begin{figure}[t]
  \centering
  \includegraphics[width=\linewidth]{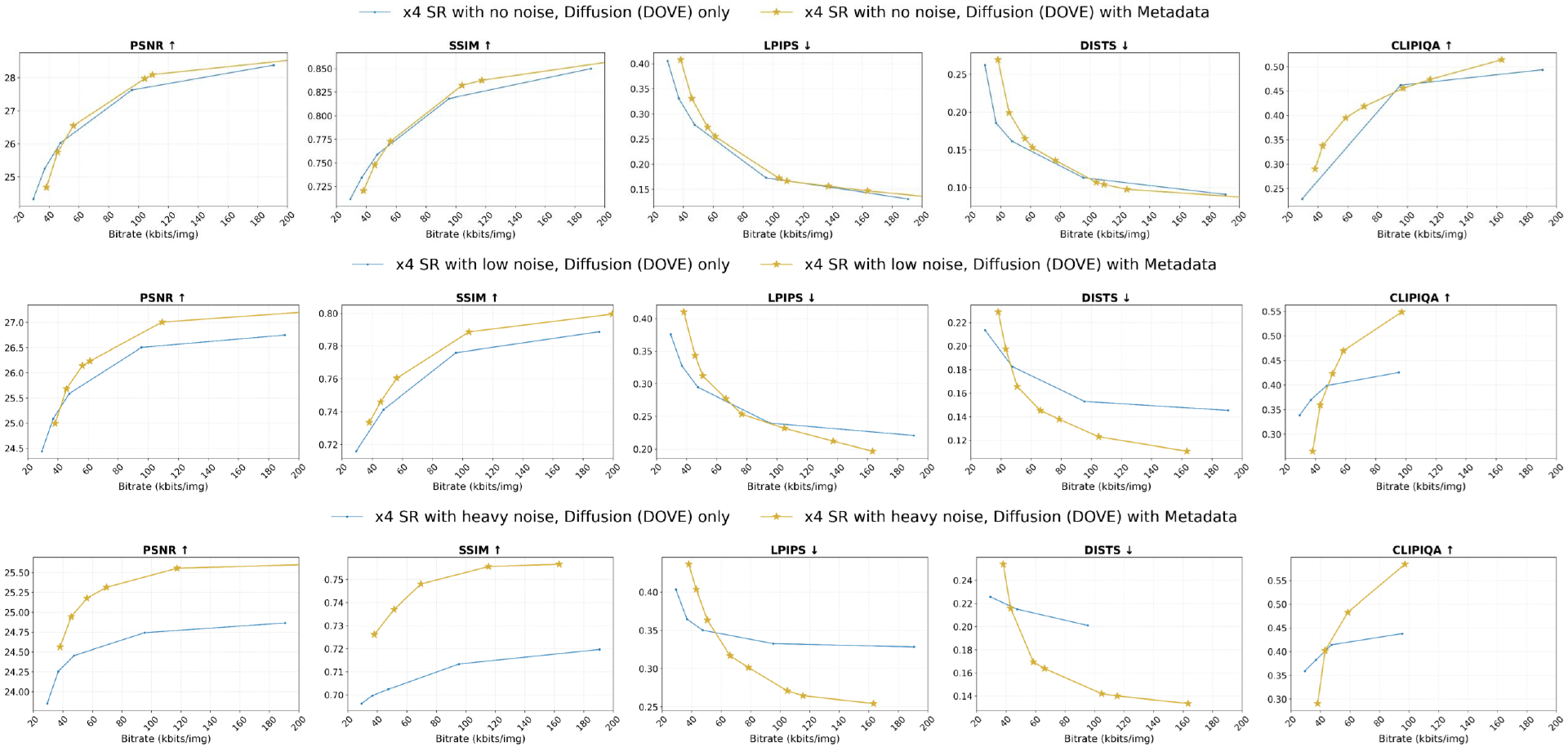}
  \caption{Rate--distortion (RDO) curves under three transmission-degradation regimes (NN/LN/HN), comparing DOVE and MetaSR at identical total transmission budgets (JPEG base layer + metadata). The performance gap widens from NN to HN, showing stronger metadata benefits under harsher channel corruption.}
  \label{fig:rdo_curves}
\end{figure}

\paragraph{Preliminary feasibility beyond SR.}
Fig.~\ref{fig:future_work} presents a preliminary example of Canny-edge-guided video frame interpolation. This result demonstrates the feasibility of extending the proposed metadata-conditioned pipeline beyond SR, and our initial experiments are encouraging for future development.

\begin{figure}[t]
  \centering
  \includegraphics[width=\linewidth]{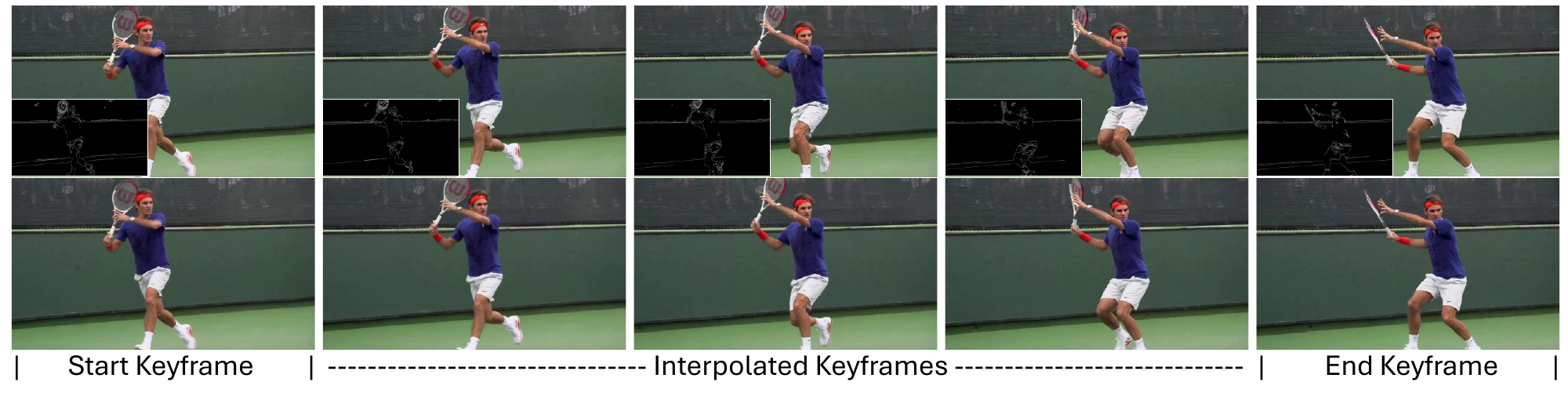}
  \caption{Preliminary future-work result: Canny-edge-guided video frame interpolation. The example shows the feasibility of extending MetaSR-style metadata conditioning beyond super-resolution.}
  \label{fig:future_work}
\end{figure}

\section{Conclusions}
\label{sec:conclusions}
We presented MetaSR, a sender--receiver collaborative framework for generative super-resolution that treats metadata as a first-class, bitrate-constrained side-information stream rather than a fixed auxiliary input. Built on CogVideoX-2B with one-step diffusion adaptation, MetaSR introduces a unified conditioning interface that reuses the native 3D VAE and DiT architecture, enabling flexible fusion of heterogeneous metadata without architectural redesign. We further provided an information-theoretic analysis showing that informative metadata reduces conditional uncertainty, which motivates metadata orchestration under transmission budgets.

Extensive experiments validate this design from both uncertainty and rate--distortion perspectives. Across transmission regimes, MetaSR consistently outperforms DOVE at matched total bitrate, and the gains become more pronounced under stronger degradations. Improvements are especially clear on structure-aware and perceptual metrics, and substantial bitrate savings are achieved at matched PSNR in degraded channels. These results support the central claim of this work: structured metadata can effectively suppress posterior ambiguity in practical compression--transmission--generation pipelines.

This study also has limitations. Current evaluation is conducted on a frame-converted subset due to the lack of off-the-shelf codecs for compact edge-like video side signals, and metadata modalities are limited to representative structured cues. Future work will focus on video-native metadata compression, temporally consistent metadata orchestration, adaptive reliability-aware gating under realistic channels, and broader metadata families for a wider range of tasks (e.g., video frame interpolation and related restoration problems), as preliminarily demonstrated in Fig.~\ref{fig:future_work}, to further improve robustness and deployment efficiency in real-world systems.

\section*{Acknowledgements}
The authors would like to thank Jianliang Yi, Weiqiang Lei and other colleagues from TCL for their valuable support throughout this work, particularly for their assistance with code implementation, technical advices and computational supports.

\bibliographystyle{splncs04}
\bibliography{main}

\end{document}